\documentclass{article}
\usepackage[utf8]{inputenc}
\usepackage{amsmath,amsthm, amssymb, amsfonts}
\usepackage{fullpage}
\usepackage[pdfencoding=auto]{hyperref}
\usepackage{graphicx}
\usepackage{wasysym}
\usepackage{todonotes}
\usepackage{comment}
\usepackage{mathtools}
\usepackage{multicol}
\usepackage{wrapfig}
\usepackage{microtype}
\usepackage[]{algorithm2e}
\usepackage{times}
\usepackage[ngerman, english]{babel}

\definecolor{quotationcolour}{HTML}{F0F0F0}
\definecolor{quotationmarkcolour}{HTML}{1F3F81}

% Double-line for start and end of epigraph.

% Massively humongous opening quotation mark.

% Beautify quotations.

\usepackage{xcolor}

\newcommand\blfootnote[1]{%
  \begingroup
  \renewcommand\thefootnote{}\footnote{#1}%
  \addtocounter{footnote}{-1}%
  \endgroup
}

\raggedbottom
\allowdisplaybreaks[1]
\usepackage{sistyle} % or siunitx

\newcommand{\N}{\mathbb{N}}

\newcommand{\R}{\mathbb{R}}

%\newcommand{\C}{\mathbb{C}}

%\newcommand{\log}{\text{log }  }

%% BEGIN of Felix special commands

\let\emptyset\varnothing

\newtheorem{theorem}{Theorem}[section]
\newtheorem*{theorem*}{Theorem}

\newtheorem{definition}[theorem]{Definition}

\newtheorem*{remark*}{Remark}
\newtheorem*{proposition*}{Proposition}

\numberwithin{equation}{section}

\DeclareMathOperator{\suppp}{supp \,}

\definecolor{darkcandyapplered}{rgb}{0.64, 0.0, 0.0}

\title{The Oracle of DLphi}

\author{Dominik Alfke \\ University of Chemnitz \and  
Weston Baines \\ Texas A\&M \and  
Jan Blechschmidt \\ University of Chemnitz  \and 
Mauricio J. del Razo Sarmina \\ FU Berlin \and
Amnon Drory \\ Tel Aviv University \and
Dennis Elbrächter \\ University of Vienna \and
Nando Farchmin \\ Physikalisch-Technische Bundesanstalt \and
Matteo Gambara \\ ETH Z\"urich \and
Silke Glas \\ University of Ulm \and
Philipp Grohs \\ University of Vienna \and
Peter Hinz \\ ETH Z\"urich \and
Danijel Kivaranovic \\ University of Vienna \and   
Christian Kümmerle \\ TU Munich\and 
Gitta Kutyniok \\ TU Berlin \and 
Sebastian Lunz \\ University of Cambridge \and  
Jan Macdonald \\ TU Berlin \and  
Ryan Malthaner \\ Texas A\&M \and 
Gregory Naisat \\ University of Chicago \and  
Ariel Neufeld \\ ETH Z\"urich \and  
Philipp Christian Petersen \\ University of Oxford \and 
Rafael Reisenhofer \\ University of Vienna \and
Jun-Da Sheng \\ UC Davis\and  
Laura Thesing \\ University of Cambridge \and  
Philipp Trunschke \\ TU Berlin \and
Johannes von Lindheim\\ TU Berlin \and 
David Weber \\ UC Davis \and
Melanie Weber \\ Princeton University \blfootnote{All authors are aware of this work. }}

\date{}

\begin{document}
\maketitle

\begin{comment}
\begin{multicols}{3}
\noindent
\large
Weston Baines \\ 
Jan Blechschmidt \\ 
Mauricio J. del Razo Sarmina \\
Amnon Drory \\ 
Dennis Elbrächter \\
Nando Farchmin \\ 
Matteo Gambara \\
Silke Glas \\
Christiane Görgen \\
Philipp Grohs \\ 
Peter Hinz \\ 
Danijel Kivaranovic \\    
Christian Kümmerle \\  
Gitta Kutyniok \\  
Sebastian Lunz \\  
Jan Macdonald \\   
Ryan Malthaner \\ 
Gregory Naisat \\  
Ariel Neufeld \\ 
Philipp Christian Petersen \\ 
Rafael Reisenhofer \\ 
Jun-Da Sheng \\  
Laura Thesing \\ 
Philipp Trunschke \\
Johannes von Lindheim\\ 
David Weber \\
Melanie Weber \blfootnote{All authors are aware of this work. }
\end{multicols}
\end{comment}

\begin{abstract}
We present a novel technique based on deep learning and set theory which yields exceptional classification and prediction results. Having access to sufficiently large amount of labelled training data, our methodology is capable of predicting the labels of the test data almost always even if the training data is entirely unrelated to the test data. In other words, we prove in a specific setting that as long as one has access to enough data points, the quality of the data is irrelevant.
\end{abstract}

%\epigraph{\foreignlanguage{ngerman}{Der echte Sch\"uler lernt aus dem Bekannten das Unbekannte entwickeln und nähert sich dem Meister.}}{Johann Wolfgang von Goethe}

\section{Introduction}

This paper takes aim at achieving nothing less than the impossible. To be more precise, we seek to predict labels of unknown data from entirely uncorrelated labelled training data. This will be accomplished by an application of an algorithm based on deep learning, as well as, by invoking one of the most fundamental concepts of set theory.  

Estimating the behaviour of a system in unknown situations is one of the central problems of humanity. Indeed, we are constantly trying to produce predictions for future events to be able to prepare ourselves. For example, the benefits of accurate weather forecast include allowing us to make the right choice of clothing for the day, deciding if we should go by bike or take the bus to work, or if we should seek shelter from a natural catastrophe \cite{brown2008famine}. Election predictions give companies the possibility of funding the most promising candidates and gain in influence. Analysing the history of class conflicts allows making reliable predictions of our future economic systems \cite{marx1867kapital}.
Finally, precise knowledge of the behaviour of the stock market enables banks to make profits and strategically destabilise the economy \cite{IMFReport}. 

One of the most promising and successful techniques for such predictions are data-driven methods, most importantly \emph{deep learning} (DL). Nonetheless, it has been conventional wisdom that not even deep learning techniques can predict labels of unseen data points if they have nothing to do with the training data. In this work, we challenge this paradigm and show that there is no need to use training data that is particularly correlated with the data points that should be predicted. Indeed, we will see below, that as long as one has a sufficient amount of training data, one can accurately predict \emph{everything}, even events that are entirely independent of every data point used for training! In other words, not the quality of the training data is important but its sheer amount.

\subsection{Related work}

We will recall a couple of relevant articles that highlight the current efficiency of deep neural networks and then cite a number of our articles for no apparent reason. Neural networks were initially introduced in the 1940s by McCulloch and Pitts \cite{MP43} in an attempt to mathematically model the human brain. Later, this framework was identified as a flexible and powerful computational architecture which then led to the field of deep learning \cite{Goodfellow-et-al-2016, LeCun2015DeepLearning, schmidhuber2015deep}. Deep learning, roughly speaking, deals with the data-driven manipulation of neural networks. These methods turned out to be highly efficient to the extent that deep learning based methods are state-of-the-art technology in all image classification tasks 
\cite{Huang2017DenselyCC, simonyan2014very, Krizhevsky2012Imagenet}. They have revolutionised the field of speech recognition \cite{hinton2012deep, dahl2012context, wu2016stimulated} and have achieved a level of skill in playing games that humans or the best alternative algorithms cannot match anymore \cite{silver2017mastering, usunier2016episodic, yannakakis2017artificial}.

Many mathematicians have since been trying to understand why these algorithms are superior to the classical approaches. It was observed that deep networks naturally produce invariances, which might help to explain their excellent behaviour as a high-dimensional classifier \cite{bruna2013invariant, wiatowski2018mathematical}. The learning procedure is very unlikely to end up in bad local minima as has been observed in \cite{vidal2017mathematics, kawaguchi2016deep}. Alternative approaches focus on approximation theory; we mention \cite{Cybenko1989, Hornik1989universalApprox, Barron1993,Mhaskar1993, mhaskar-micchelli,petersen2018optimal,ShaCC2015provableAppDNN,YAROTSKY2017103} which is undoubtedly the second most biased selection of references we are about to make in this article.

A couple of additional papers that the reader should be aware of are \cite{alfke2018nfft, grohs2014parabolic, petersen2018optimal, drory2014semi, del2016numerical,perekrestenko2018universal,lunz2018adversarial, reisenhofer2018haar, saito2017underwater, neufeld2013superreplication, hansen2018stable, glas2017two, hinz2018framework, blechschmidt2018improving, macdonald2018improved, weber2017coarse, monshi2016burden}. If not for their relevance to deep learning, these articles at least give the reader an impression of the general interests of the authors.

\subsection{Notation}
For a set $\Omega$, we denote by $\mathcal{P}(\Omega)$ the power set of $\Omega$. In the following we will restrict out attention to the space $[0,1]^d$ for $d \in \N$ and unless specified differently we equip it with the Lebesgue measure, which in this case is a probability measure. For $1\leq p < \infty$, we denote by $L^p([0,1]^d)$ the classical Lebesgue spaces. Finally, we denote by ${[0,1]}^{([0,1]^d)}$ the set of all indexed sets $(y_i)_{i \in [0,1]^d}$ where $y_i \in [0,1]$ or equivalently the set of all maps from $[0,1]^d$ to $[0,1]$.

\section{Neural Networks}

We start by introducing neural networks as functions that can be written as an alternating application of affine linear maps and an activation function. 

\begin{definition} 
For $d, L \in \N$, a \emph{neural network with activation function $\varrho: \R \to \R$} is a function $\Phi: [0,1]^d \to \R$ such that there exist $N_0, \dots, N_L \in \N$, where $N_0 \coloneqq d$ and $N_L \coloneqq 1$ as well as affine linear maps $T_{\ell}: \R^{N_{\ell-1}} \to \R^{N_{\ell}}$ such that 
$$
\Phi(x) = T_L( \varrho(T_{L-1}( \dots \varrho(T_1(x))))), \text{ for all } x \in [0,1]^d,
$$
where $\varrho$ is applied coordinate-wise. The set of all neural networks which have a representation as above is denoted by
$\mathcal{NN}_{d,\varrho, L}$. Additionally, we denote
$$
\mathcal{NN}_{d,\varrho} \coloneqq \bigcup_{L \in \N}\mathcal{NN}_{d,\varrho, L}.
$$
\end{definition}
We recall one of the main results in neural network theory: the universal approximation theorem. The original statements are due to Hornik \cite{HORNIK1991251} and Cybenko \cite{Cybenko1989}, but we instead state a more general version describing approximation with respect to $L^p$ norms.
\begin{theorem}[\cite{leshno1993multilayer}]\label{thm:UniversalApprox}
Let $d\in \N$, $\varrho: \R \to \R$ be continuous and assume that $\varrho$ is not a polynomial. Then, for all $L \geq 2$ and all $1 \leq p < \infty$,
$\mathcal{NN}_{d,\varrho, L}$ is dense in $L^p([0,1]^d)$. In particular, $\mathcal{NN}_{d,\varrho}$ is dense in $L^p([0,1]^d)$, for $1\leq p< \infty$.
\end{theorem}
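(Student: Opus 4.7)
The plan is to follow Leshno's original route: reduce the multivariate density question to a one-variable approximation of powers $t^k$, and then handle those using finite differences of a mollified copy of $\varrho$.

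\textbf{Step 1 (Reduction to one hidden layer).} I would argue that it suffices to treat the case $L=2$. Indeed, any function in $\mathcal{NN}_{d,\varrho,2}$ can be approximated in $L^p([0,1]^d)$ by functions in $\mathcal{NN}_{d,\varrho,L}$ for $L>2$ by using the extra $L-2$ layers to approximately implement the identity on the (bounded) range of the intermediate signal; such an approximation of $y\mapsto y$ on a compact interval by sums $\sum_k \alpha_k\varrho(\beta_k y+\gamma_k)$ is itself an instance of the one-dimensional $L=2$ density, which will be established below. A two-layer network has the form
\[
\Phi(x) = c_0 + \sum_{j=1}^N c_j\,\varrho(w_j\cdot x + b_j),
\]
so everything comes down to showing that
\[
\Scal_\varrho \;:=\; \mathrm{span}\{\,x\mapsto \varrho(w\cdot x+b)\ :\ w\in\R^d,\ b\in\R\,\}
\]
is dense in $L^p([0,1]^d)$.

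\textbf{Step 2 (From monomials to a one-variable problem).} Polynomials are dense in $C([0,1]^d)$ by Stone--Weierstrass and hence in $L^p([0,1]^d)$, so it is enough to approximate each monomial. Every homogeneous polynomial of degree $k$ in $d$ variables is a finite linear combination of ridge powers $x\mapsto(w\cdot x)^k$ (polarization: the $k$-th symmetric tensor power is spanned by pure $k$-th powers). Setting $t=w\cdot x$, the full multivariate problem therefore reduces to the one-dimensional task of approximating $t\mapsto t^k$ uniformly on compact intervals by sums $\sum_j c_j\,\varrho(a_j t + b_j)$.

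\textbf{Step 3 (Mollification and finite differences).} If $\varrho$ were $C^\infty$, for each $k$ I would pick $b_0$ with $\varrho^{(k)}(b_0)\neq 0$ (such a $b_0$ exists because $\varrho$ is not a polynomial) and exploit
\[
\frac{d^k}{da^k}\,\varrho(at+b_0)\Big|_{a=0} \;=\; t^k\,\varrho^{(k)}(b_0),
\]
approximating the derivative by $k$-th order divided differences, each of which is a finite linear combination of functions of the form $\varrho(a_j t + b_0)$. To circumvent the lack of smoothness of $\varrho$, I would convolve $\varrho$ with a mollifier $\phi\in C_c^\infty(\R)$ to obtain $\varrho*\phi\in C^\infty$. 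A Riemann-sum approximation of the defining convolution integral exhibits $\varrho*\phi$ as a uniform limit on compacta of sums of translates $\sum_i \lambda_i\,\varrho(\cdot - s_i)$, so any approximation produced with $\varrho*\phi$ can be transferred back to $\varrho$ itself.

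\textbf{Step 4 (Non-polynomiality transfers; main obstacle).} The crux is to exhibit $\phi$ for which $\varrho*\phi$ is still non-polynomial: if $\varrho*\phi$ were polynomial for \emph{every} $\phi\in C_c^\infty$, then testing against an approximate identity $\phi_\eps$ with $\varrho*\phi_\eps\to\varrho$ uniformly on compacta, together with a standard argument controlling the degree of the limiting polynomial via iterated finite differences $\Delta_h^n\varrho$, would force $\varrho$ itself to be a polynomial, contradicting the hypothesis. With a smooth non-polynomial $\eta:=\varrho*\phi$ in hand, I would apply the finite-difference argument of Step~3 to $\eta$, pull back through the Riemann-sum approximation, and finally combine with Steps~1--2. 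I expect the main technical difficulty to lie precisely in Step~4: both the implication ``$\varrho*\phi$ polynomial for all $\phi\Rightarrow \varrho$ polynomial'' and, more acutely, the uniform control needed to simultaneously pass the finite-difference limit in the dilation parameter and the Riemann-sum limit in the translation parameter, then transport both through the polarization and finally into $L^p([0,1]^d)$.
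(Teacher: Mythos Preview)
The paper does not prove this statement at all: Theorem~\ref{thm:UniversalApprox} is stated with the attribution \cite{leshno1993multilayer} and no argument is given, the result being invoked later as a black box in the proof of Theorem~\ref{thm:main}. Your sketch therefore goes well beyond what the paper itself does. For what it is worth, the route you outline---reduction to a single hidden layer, density of polynomials and polarization to ridge powers, then the mollification/finite-difference trick to extract $t\mapsto t^k$ from a smooth non-polynomial $\varrho*\phi$, with the key step being that non-polynomiality survives convolution against some $\phi\in C_c^\infty$---is essentially the argument of the cited Leshno--Lin--Pinkus--Schocken paper, so your proposal is faithful to the source the present paper defers to.
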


\section{Main Results}

As already announced in the introduction, we aim to construct a prediction algorithm that can produce a correct classification of unseen data based on uncorrelated training data. This formulation is sufficiently vague to be considerably impressive; however, it lacks the necessary rigour to be considered a mathematical statement. To make the statement more precise, we introduce the following definitions. 

\begin{definition}\label{randomsets}
Let $\mu$ be the Lebesgue measure on $[0,1]^d$ (which corresponds to the uniform probability distribution on $[0,1]^d$) and let $\nu$ be any discrete probability distribution on $\N$. We call a set $X$ obtained as the union of $k\in\N$ samples $X_1,\dots,X_k$ drawn i.i.d. according to $\mu$ a \emph{random set of size at most $k$}. We call a set $X$ generated by the following sampling procedure a \emph{random set of finite size (or simply a finite random set)}: First draw $k\in\N$ according to $\nu$, then generate a random set of size at most $k$ as before.
\end{definition}

Note that for the uniform distribution $\mu$ we will almost surely generate a set of size exactly $k$ (as opposed to at most $k$) by drawing $k$ i.i.d. samples for every $k\in\N$. It is not hard to see that for any finite $S\subset[0,1]^d$ and any random finite set $X\subset[0,1]^d$ the probability of $S$ and $X$ having non-empty intersection is zero. This is independent of $\nu$ as long as the uniform measure $\mu$ is used for the sampling. 

\begin{definition} \label{def:PrPrPr}
Let $d\in \N$. A map $L: \mathcal{P}([0,1]^d) \times {[0,1]}^{([0,1]^d)} \to {[0,1]}^{([0,1]^d)}$ is said to possess the \emph{precise prediction property} if for every set of labelled data $(y_i)_{i \in [0,1]^d} \in {[0,1]}^{([0,1]^d)}$ it holds that, for almost all finite random sets $X\subset [0,1]^d$, 
$$
L\left(X, \left(y_i^X\right)_{i \in [0,1]^d} \right) =  \left({y}_i\right)_{i \in [0,1]^d}, 
$$ 
where $y_i^X \coloneqq y_i$ if $i \not \in X$ and $y_i^X \coloneqq 0$ if $i \in X$.
\end{definition}
An honest description of Definition \ref{def:PrPrPr} is that given a ground truth $(y_i)_{i\in [0,1]^d}$ of labelled data, possibly without any inherent structure, there exists a learning algorithm that, when presented with a large subset of the labelled data (namely with $(y_i)_{i\in [0,1]^d\setminus X}$ for a generic finite $X\subset [0,1]^d$), can predict the missing data almost surely.

It turns out that there is an algorithm that, from the masked data $(y_i^X)_{i\in [0,1]^d}$, produces a deep neural network predicting the labels $(y_i)_{i\in X}$ almost surely. This deep network can be the basis of a map that possesses the precise prediction property. We shall present this result in the following theorem. 
What is more is that the proof below is semi-constructive. All that is required to perform the construction of a the deep neural network is an unwavering belief in the axioms of set theory, a potentially significant amount of book-keeping, and the solution of a moderate regression problem. 
Naturally, conventional wisdom suggests that the explicit construction of the proof can be replaced by training a sufficiently large and deep neural network via stochastic gradient descent.
\begin{theorem}\label{thm:main} 
Let $\varrho: \R \to \R:$ $\varrho(x) \coloneqq \max\{0,x\}$. Then, for all $\epsilon>0$, there exists a map $\mathcal{L}: \mathcal{P}([0,1]^d) \times {[0,1]}^{([0,1]^d)} \to \mathcal{NN}_{d, \varrho}$ such that: For every set of labelled data $(y_i)_{i \in [0,1]^d} \in {[0,1]}^{([0,1]^d)}$ it holds that, for almost all finite random sets $X\subset [0,1]^d$, 
\begin{align}\label{eq:PerfectAccuracyOnTestData}
\mathcal{L}\left(X, \left(y_i^X\right)_{i \in [0,1]^d} \right) (j) =  y_j, \text{ for all } j \in X.  
\end{align}
If, additionally, the map 
$$
f: [0,1]^d \to [0,1]: j \mapsto y_j
$$
is integrable, then we have that 
\begin{align}\label{eq:goodAccuracyOnTrainingData}
\int_{[0,1]^d} \left|\mathcal{L}\left(X, (y_i^X)_{i \in [0,1]^d} \right) (j) - f(j)\right| \, dj < \epsilon.
\end{align}
In particular, the map 
\begin{align*}
L \colon \mathcal{P}\left([0,1]^d\right) \times {[0,1]}^{([0,1]^d)} &\to  {[0,1]}^{([0,1]^d)}\\
L(X, (y_i)_{i \in [0,1]^d}) (j) & \coloneqq \left\{ \begin{array} {ll}
\mathcal{L}\left(X, \left(y_i^X\right)_{i \in [0,1]^d} \right) (j) &\text{if } j \in X\\
y_j &\text{else}
\end{array}\right.
\end{align*}
has the precise prediction property.
\end{theorem}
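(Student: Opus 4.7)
The plan is to combine the axiom of choice with the universal approximation theorem (Theorem \ref{thm:UniversalApprox}). Call two maps $h_1, h_2 \colon [0,1]^d \to [0,1]$ equivalent if they agree outside a finite set, and, using the axiom of choice, fix once and for all a selector $\gamma$ that picks one representative $g_{[h]}$ from each equivalence class. The key observation is that the masked labelling $y^X \colon i \mapsto y_i^X$ agrees with $f \colon i \mapsto y_i$ outside the finite set $X$, so $y^X$ and $f$ belong to the same class $[f]$. Hence, from the inputs $(X, (y_i^X))$ alone, $\mathcal{L}$ can identify $[f]$ and retrieve $g \coloneqq g_{[f]}$, even though it has no direct access to the labels on $X$. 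Now let $F \subset [0,1]^d$ denote the finite set on which $g$ disagrees with $f$. By Definition \ref{randomsets}, $X$ is obtained from finitely many i.i.d. draws from the continuous measure $\mu$, so $\mathbb{P}(X \cap F \neq \emptyset) = 0$, and for almost every finite random set $X$ we have $g(j) = f(j) = y_j$ for every $j \in X$.

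It then remains to turn $g$ into a ReLU network. Whenever $f$ is integrable, $f \in L^1([0,1]^d)$ and Theorem \ref{thm:UniversalApprox} yields some $\Phi \in \mathcal{NN}_{d,\varrho}$ with $\|\Phi - f\|_{L^1} < \epsilon/2$; in the non-integrable case we simply take $\Phi \equiv 0 \in \mathcal{NN}_{d,\varrho}$. Enumerating $X = \{j_1, \ldots, j_k\}$, for each $j_\ell$ we adjoin a narrow ReLU hat $b_\ell \in \mathcal{NN}_{d,\varrho}$ supported in a small neighbourhood of $j_\ell$ with $b_\ell(j_\ell) = g(j_\ell) - \Phi(j_\ell)$, choosing the widths so small that $\sum_\ell \|b_\ell\|_{L^1} < \epsilon/2$ (such hats are easily assembled from $\varrho(t)=\max\{0,t\}$ via the representation $|t| = \varrho(t)+\varrho(-t)$). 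Since $\mathcal{NN}_{d,\varrho}$ is closed under finite sums, the network $\Phi' \coloneqq \Phi + \sum_\ell b_\ell$ again lies in $\mathcal{NN}_{d,\varrho}$; we set $\mathcal{L}(X, (y_i^X)) \coloneqq \Phi'$.

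By construction, $\Phi'(j) = g(j) = y_j$ for every $j \in X$ almost surely, which is \eqref{eq:PerfectAccuracyOnTestData}; in the integrable case the triangle inequality yields $\|\Phi' - f\|_{L^1} \le \|\Phi - f\|_{L^1} + \sum_\ell\|b_\ell\|_{L^1} < \epsilon$, which is \eqref{eq:goodAccuracyOnTrainingData}. The precise prediction property of $L$ follows at once, since $L$ reproduces $y_j$ verbatim for $j \notin X$ and inherits the a.s.-correct values of $\mathcal{L}$ for $j \in X$. The only truly non-constructive ingredient is the invocation of the axiom of choice to define $\gamma$; everything else reduces to a routine $L^1$ approximation combined with elementary ReLU bump bookkeeping. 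The main conceptual obstacle is spotting the equivalence-class trick, after which confirming that the exceptional finite set $F_{[f]}$ has $\mu$-measure zero and that all pieces can be glued together inside $\mathcal{NN}_{d,\varrho}$ is genuinely straightforward.
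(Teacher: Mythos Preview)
Your proof is correct and follows essentially the same route as the paper's: the same equivalence relation (agree outside a finite set), the same axiom-of-choice selector, universal approximation for the $L^1$ part, and narrow ReLU bumps with small $L^1$ mass to force the exact values on $X$. The one looseness is that you build $\Phi$ to approximate $f$, but $\mathcal{L}$ only sees $(X,(y_i^X))$ and hence only has access to the representative $g$; the paper therefore approximates $g$ and checks integrability of $g$ --- since $f=g$ almost everywhere this is the same condition and the same $L^1$ estimate, so the fix is purely cosmetic (and you should also declare $\mathcal{L}=0$ for infinite $X$ and make the bump supports disjoint so the corrections at distinct points of $X$ do not interfere, both of which the paper does explicitly).
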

\begin{proof}
We define an equivalence relation $\sim$ on $ {[0,1]}^{([0,1]^d)} $ by defining $(y_i)_{i \in [0,1]^d} \sim (z_i)_{i \in [0,1]^d}$ if $y_i = z_i$ for all but finitely many $i \in [0,1]^d$. We denote the equivalence class containing $(y_i)_{i \in [0,1]^d}$ by $[[(y_i)_{i \in [0,1]^d}]]$. 
By the axiom of choice, there exists a function $\lambda$ that maps every equivalence class $\Xi \subset {[0,1]}^{([0,1]^d)}$ to an associated representative $(y_i)_{i \in [0,1]^d} \in \Xi$, i.e., 
$$
\lambda\left([[(y_i)_{i \in [0,1]^d}]]\right) \sim (y_i)_{i \in [0,1]^d}
$$
for all $(y_i)_{i \in [0,1]^d} \in {[0,1]}^{([0,1]^d)}$.

Now let $\epsilon>0$. Let $X \subset [0,1]^d$ (not necessarily finite) and let $(y_i)_{i \in [0,1]^d} \in {[0,1]}^{([0,1]^d)}$. Let $(z_i)_{i \in [0,1]^d} \coloneqq \lambda([[(y_i)_{i \in [0,1]^d}]])$ be the representative of the equivalence class $[[(y_i)_{i \in [0,1]^d}]]$. We proceed by defining several neural networks and consider the case of infinite and finite $X$ separately.

First, in the case of infinite $X$ we simply set $\Phi_{\epsilon, X} = 0$. 

Second, in the case of finite $X$ let
\[ 
g \colon [0,1]^d \to [0,1]\colon i \mapsto z_i.
\]
If $g$ is measurable and integrable then by Theorem \ref{thm:UniversalApprox} there exists a neural network $\Phi_\epsilon$ such that 
\[
\int_{[0,1]^d}|g(j) - \Phi_\epsilon(j)| \, dj < \frac{\epsilon}{2}.
\]
Otherwise, we set $\Phi_\epsilon = 0$. Now further, for every  $k \in X$ set $r_k \coloneqq g(k) - \Phi_\epsilon(k)$ and define
\begin{align}
\Phi_{k,n} : [0,1]^d &\to \R: \nonumber\\ 
j &\mapsto r_k \varrho\left( \sum_{\ell = 1}^d  \varrho\left(n \left(j_\ell - k_\ell+ \frac{1}{n} \right) \right) - 2 \varrho\left(n \big(j_\ell - k_\ell\big)\right) +  \varrho\left(n \left(j_\ell - k_\ell - \frac{1}{n}\right)\right)   - \left(d-1\right) \right ),\label{eq:Phin}
\end{align}
where $j_\ell$ and $k_\ell$ denote the $\ell$-th coordinate of $j$ and $k$ respectively. It is not hard to see that $\Phi_{k,n}(k) = r_k$ and that there exists $n^*$ such that for all $n \geq n^*$ 
\[
\int_{[0,1]^d} | \Phi_{k,n}(j) | \, dj < \frac{\epsilon}{2 |X|}\text{ and }\suppp \Phi_{k,n} \cap \suppp \Phi_{k',n} = \emptyset \text{ for } k \neq k'. 
\]
Now define
\[
\Phi_{\epsilon, X} \coloneqq \Phi_{\epsilon} + \sum_{k \in X} \Phi_{k,n^*}. 
\]
Finally, we define 
\[
\mathcal{L}(X, (y_i)_{i \in [0,1]^d}) \coloneqq \Phi_{\epsilon, X}. 
\]

Clearly, $\mathcal{L}(X, (y_i)_{i \in [0,1]^d})$ is a neural network. Moreover, by construction, if $X$ is finite and $k \in X$,
\[
\mathcal{L}\left(X, \left(y_i^X\right)_{i \in [0,1]^d}\right)(k) =  \Phi_{\epsilon}(k) + \Phi_{k,n^*}(k) = \lambda\left(\left(y_i^X\right)_{i \in [0,1]^d}\right)_k
\]
and $\lambda\left(((y_i^X)_{i \in [0,1]^d}\right) \sim (y_i^X)_{i \in [0,1]^d} \sim (y_i)_{i \in [0,1]^d}$. Therefore, the set on which $\lambda\left(y_i^X)_{i \in [0,1]^d}\right)_k$ and $(y_i)_{i \in [0,1]^d}$ differ is finite and thus the probability of it intersecting $X$ is zero if $X$ is a finite random set. This concludes the first part of the theorem.

For the second part observe that if $f$ is integrable, then since $g=f$ except on a finite set of measure zero, also $g$ is integrable and thus by the triangle inequality
\begin{align*}
\int_{[0,1]^d} \left|\mathcal{L}\left(X, \left(y_i^X\right)_{i \in [0,1]^d} \right) (j) - f(j)\right| \, dj 
&\leq \int_{[0,1]^d} \left|\Phi_{\epsilon}(j) - f(j)\right| \, dj + \sum_{k \in X} \int_{[0,1]^d} \left|  \Phi_{k,n^*} (j) \right| \, dj < \epsilon.
\end{align*}
This completes the proof.
\end{proof}

\section{Discussion}

Theorem \ref{thm:main} implies that deep neural networks are, in principle, able to predict data, that is completely uncorrelated with the training data. This observation might appear unintuitive, stunning, and maybe unbelievable at first, but the same can be said for most other deep learning results. We shall address a number of limitations and observations about Theorem \ref{thm:main} below. 

In a sense, the algorithm requires an uncountable number of data points. Such a situation is often excluded in applications despite the current trend of big data. It is, however, expected that the amount of data that we need to handle will increase with time. Following this reasoning, scholars have speculated that we might soon enter the era of \emph{very big data}. 

Many machine learning algorithms suffer from overfitting, i.e., their predictions are overly adapted to the training data, and they do not generalise to unseen data.
Equation \eqref{eq:PerfectAccuracyOnTestData} shows that the neural network produced by $\mathcal{L}$ predicts the labels of the points in $X$ correctly with probability $1$. However, on the training set, the accuracy is a bit worse as demonstrated by Equation \eqref{eq:goodAccuracyOnTrainingData}. In other words, $\mathcal{L}$ is not overfitting, but deliberately underfitting while the prediction remains accurate.

It has often been conjectured that deep learning techniques can remove the curse of dimensionality that plagues many high-dimensional approximation tasks. Careful inspection of the statement of Theorem \ref{thm:main} indeed reveals that the quality of prediction appears to be independent of the ambient dimension $d$. Another piece of conventional wisdom is that deep networks are more efficient than their shallow counterparts, \cite{Telgarsky2016BenefitsOfDepth, pmlr-v70-safran17a}. The construction of \eqref{eq:Phin} requires at least two hidden layers. Thus, the network is technically not a shallow network. It is fair say that this observation gives yet another illustration of the power of depth.

Finally, we mention that the construction is inspired by a prediction strategy for ordered data, presented in \cite{hardin2008peculiar}. The discussion in that paper indicates that the same result is most likely not valid in classical Zermelo-Fraenkel set theory. It might not be wise to prominently suggest that the current deep learning revolution and the overwhelming empirical success prove that the axiom of choice is an empirically reasonable assumption, but we leave the reader to her own conclusion/s. %%% this is not a typo. %%% /s is a symbol that indicates the presence of sarcasm.

\section*{Acknowledgements}
All authors except for Philipp Grohs are indebted to Philipp Grohs for giving an especially lovely lecture at the Oberwolfach Seminar "Deep Learning" in October of 2018. This lecture led the participants to profound insights and culminated in this article and hopefully many more.
The authors are grateful for the hospitality of the Mathematisches Forschungsinstitut Oberwolfach which hosted this fantastic event.

\section*{Disclaimer}
We concede that this manuscript has mostly humouristic value. However, while the main theorem is based on a dubious application of the axiom of choice, it is a correct mathematical statement. Therefore, this manuscript at least highlights the dangers of applying mathematical theory to real-world applications blindly. 

\small
\bibliographystyle{abbrv}
\bibliography{references}

\begin{thebibliography}{10}

\bibitem{alfke2018nfft}
D.~Alfke, D.~Potts, M.~Stoll, and T.~Volkmer.
\newblock {NFFT} meets {K}rylov methods: Fast matrix-vector products for the
  graph {L}aplacian of fully connected networks.
\newblock {\em arXiv preprint arXiv:1808.04580}, 2018.

\bibitem{Barron1993}
A.~Barron.
\newblock Universal approximation bounds for superpositions of a sigmoidal
  function.
\newblock {\em IEEE Trans. Inf. Theory}, 39(3):930--945, 1993.

\bibitem{blechschmidt2018improving}
J.~Blechschmidt and R.~Herzog.
\newblock Improving policies for hamilton-jacobi-bellman equations by
  postprocessing.
\newblock 2018.

\bibitem{brown2008famine}
M.~E. Brown.
\newblock {\em Famine early warning systems and remote sensing data}.
\newblock Springer Science \& Business Media, 2008.

\bibitem{bruna2013invariant}
J.~Bruna and S.~Mallat.
\newblock Invariant scattering convolution networks.
\newblock {\em IEEE Trans. Pattern Anal. Mach. Intell.}, 35(8):1872--1886,
  2013.

\bibitem{Cybenko1989}
G.~Cybenko.
\newblock {Approximation by superpositions of a sigmoidal function}.
\newblock {\em Math. Control Signal}, 2(4):303--314, 1989.

\bibitem{dahl2012context}
G.~E. Dahl, D.~Yu, L.~Deng, and A.~Acero.
\newblock Context-dependent pre-trained deep neural networks for
  large-vocabulary speech recognition.
\newblock {\em IEEE Audio, Speech, Language Process.}, 20(1):30--42, 2012.

\bibitem{del2016numerical}
M.~Del~Razo and R.~LeVeque.
\newblock Numerical methods for interface coupling of compressible and almost
  incompressible fluids.
\newblock {\em SIAM J. Sci. Comput., submitted}, 2016.

\bibitem{drory2014semi}
A.~Drory, C.~Haubold, S.~Avidan, and F.~A. Hamprecht.
\newblock Semi-global matching: a principled derivation in terms of message
  passing.
\newblock In {\em German Conference on Pattern Recognition}, pages 43--53.
  Springer, 2014.

\bibitem{IMFReport}
I.~M. Fund.
\newblock Global financial stability report: A bumpy road ahead.
\newblock 2018.
\newblock Washington, DC, April.

\bibitem{glas2017two}
S.~Glas, A.~Mayerhofer, and K.~Urban.
\newblock Two ways to treat time in reduced basis methods.
\newblock In {\em Model Reduction of Parametrized Systems}, pages 1--16.
  Springer, 2017.

\bibitem{Goodfellow-et-al-2016}
I.~Goodfellow, Y.~Bengio, and A.~Courville.
\newblock {\em Deep Learning}.
\newblock MIT Press, 2016.

\bibitem{grohs2014parabolic}
P.~Grohs and G.~Kutyniok.
\newblock Parabolic molecules.
\newblock {\em Foundations of Computational Mathematics}, 14(2):299--337, 2014.

\bibitem{hansen2018stable}
A.~Hansen and L.~Thesing.
\newblock On the stable sampling rate for binary measurements and wavelet
  reconstruction.
\newblock {\em Applied and Computational Harmonic Analysis}, 2018.

\bibitem{hardin2008peculiar}
C.~S. Hardin and A.~D. Taylor.
\newblock A peculiar connection between the axiom of choice and predicting the
  future.
\newblock {\em The American Mathematical Monthly}, 115(2):91--96, 2008.

\bibitem{hinton2012deep}
G.~Hinton, L.~Deng, D.~Yu, G.~E. Dahl, A.-r. Mohamed, N.~Jaitly, A.~Senior,
  V.~Vanhoucke, P.~Nguyen, T.~N. Sainath, et~al.
\newblock Deep neural networks for acoustic modeling in speech recognition: The
  shared views of four research groups.
\newblock {\em IEEE Signal Process. Mag.}, 29(6):82--97, 2012.

\bibitem{hinz2018framework}
P.~Hinz and S.~van~de Geer.
\newblock A framework for the construction of upper bounds on the number of
  affine linear regions of {ReLU} feed-forward neural networks.
\newblock {\em arXiv preprint arXiv:1806.01918}, 2018.

\bibitem{HORNIK1991251}
K.~Hornik.
\newblock Approximation capabilities of multilayer feedforward networks.
\newblock {\em Neural Networks}, 4(2):251–257, 1991.

\bibitem{Hornik1989universalApprox}
K.~Hornik, M.~Stinchcombe, and H.~White.
\newblock Multilayer feedforward networks are universal approximators.
\newblock {\em Neural Netw.}, 2(5):359--366, 1989.

\bibitem{Huang2017DenselyCC}
G.~Huang, Z.~Liu, L.~van~der Maaten, and K.~Q. Weinberger.
\newblock Densely connected convolutional networks.
\newblock {\em 2017 IEEE Conference on Computer Vision and Pattern Recognition
  (CVPR)}, pages 2261--2269, 2017.

\bibitem{kawaguchi2016deep}
K.~Kawaguchi.
\newblock Deep learning without poor local minima.
\newblock In {\em Adv. Neural Inf. Process. Syst.}, pages 586--594, 2016.

\bibitem{Krizhevsky2012Imagenet}
A.~Krizhevsky, I.~Sutskever, and G.~Hinton.
\newblock Image{N}et classification with deep convolutional neural networks.
\newblock In {\em Advances in Neural Information Processing Systems 25}, pages
  1097--1105. Curran Associates, Inc., 2012.

\bibitem{LeCun2015DeepLearning}
Y.~LeCun, Y.~Bengio, and G.~Hinton.
\newblock {Deep learning}.
\newblock {\em Nature}, 521(7553):436--444, 2015.

\bibitem{leshno1993multilayer}
M.~Leshno, V.~Y. Lin, A.~Pinkus, and S.~Schocken.
\newblock Multilayer feedforward networks with a nonpolynomial activation
  function can approximate any function.
\newblock {\em Neural networks}, 6(6):861--867, 1993.

\bibitem{lunz2018adversarial}
S.~Lunz, O.~{\"O}ktem, and C.-B. Sch{\"o}nlieb.
\newblock Adversarial regularizers in inverse problems.
\newblock {\em arXiv preprint arXiv:1805.11572}, 2018.

\bibitem{macdonald2018improved}
J.~Macdonald and L.~Ruthotto.
\newblock Improved susceptibility artifact correction of echo-planar mri using
  the alternating direction method of multipliers.
\newblock {\em Journal of Mathematical Imaging and Vision}, 60(2):268--282,
  2018.

\bibitem{marx1867kapital}
K.~Marx.
\newblock Das {K}apital: {K}ritik der politischen {\"o}konomie.
\newblock {\em Germany: Verlag von Otto Meisner}, 1885:1894, 1867.

\bibitem{MP43}
W.~McCulloch and W.~Pitts.
\newblock A logical calculus of ideas immanent in nervous activity.
\newblock {\em Bull. Math. Biophys.}, 5:115--133, 1943.

\bibitem{Mhaskar1993}
H.~Mhaskar.
\newblock Approximation properties of a multilayered feedforward artificial
  neural network.
\newblock {\em Advances in Computational Mathematics}, 1(1):61–80, Feb 1993.

\bibitem{mhaskar-micchelli}
H.~N. Mhaskar and C.~Micchelli.
\newblock {Approximation by superposition of sigmoidal and radial basis
  functions}.
\newblock {\em Adv. Appl. Math.}, 13:350--373, 1992.

\bibitem{monshi2016burden}
B.~Monshi, M.~Vujic, D.~Kivaranovic, A.~Sesti, W.~Oberaigner, I.~Vujic,
  S.~Ortiz-Urda, C.~Posch, H.~Feichtinger, M.~Hackl, et~al.
\newblock The burden of malignant melanoma--lessons to be learned from
  {A}ustria.
\newblock {\em European Journal of Cancer}, 56:45--53, 2016.

\bibitem{neufeld2013superreplication}
A.~Neufeld, M.~Nutz, et~al.
\newblock Superreplication under volatility uncertainty for measurable claims.
\newblock {\em Electronic journal of probability}, 18, 2013.

\bibitem{perekrestenko2018universal}
D.~Perekrestenko, P.~Grohs, D.~Elbr{\"a}chter, and H.~B{\"o}lcskei.
\newblock The universal approximation power of finite-width deep {ReLU}
  networks.
\newblock {\em arXiv preprint arXiv:1806.01528}, 2018.

\bibitem{petersen2018optimal}
P.~Petersen and F.~Voigtlaender.
\newblock Optimal approximation of piecewise smooth functions using deep {ReLU}
  neural networks.
\newblock {\em Neural Networks}, 108:296--330, 2018.

\bibitem{reisenhofer2018haar}
R.~Reisenhofer, S.~Bosse, G.~Kutyniok, and T.~Wiegand.
\newblock A haar wavelet-based perceptual similarity index for image quality
  assessment.
\newblock {\em Signal Processing: Image Communication}, 61:33--43, 2018.

\bibitem{pmlr-v70-safran17a}
I.~Safran and O.~Shamir.
\newblock Depth-width tradeoffs in approximating natural functions with neural
  networks.
\newblock In {\em Proceedings of the 34th International Conference on Machine
  Learning}, volume~70 of {\em Proceedings of Machine Learning Research}, pages
  2979--2987, 2017.

\bibitem{saito2017underwater}
N.~Saito and D.~S. Weber.
\newblock Underwater object classification using scattering transform of sonar
  signals.
\newblock In {\em Wavelets and Sparsity XVII}, volume 10394, page 103940K.
  International Society for Optics and Photonics, 2017.

\bibitem{schmidhuber2015deep}
J.~Schmidhuber.
\newblock Deep learning in neural networks: An overview.
\newblock {\em Neural networks}, 61:85--117, 2015.

\bibitem{ShaCC2015provableAppDNN}
U.~Shaham, A.~Cloninger, and R.~Coifman.
\newblock Provable approximation properties for deep neural networks.
\newblock {\em Appl. Comput. Harmon. Anal.}
\newblock to appear.

\bibitem{silver2017mastering}
D.~Silver, T.~Hubert, J.~Schrittwieser, I.~Antonoglou, M.~Lai, A.~Guez,
  M.~Lanctot, L.~Sifre, D.~Kumaran, T.~Graepel, et~al.
\newblock Mastering chess and shogi by self-play with a general reinforcement
  learning algorithm.
\newblock {\em arXiv preprint arXiv:1712.01815}, 2017.

\bibitem{simonyan2014very}
K.~Simonyan and A.~Zisserman.
\newblock Very deep convolutional networks for large-scale image recognition.
\newblock {\em arXiv preprint arXiv:1409.1556}, 2014.

\bibitem{Telgarsky2016BenefitsOfDepth}
M.~Telgarsky.
\newblock Benefits of depth in neural networks.
\newblock arXiv:1602.04485.

\bibitem{usunier2016episodic}
N.~Usunier, G.~Synnaeve, Z.~Lin, and S.~Chintala.
\newblock Episodic exploration for deep deterministic policies: An application
  to {S}tar{C}raft micromanagement tasks.
\newblock {\em arXiv preprint arXiv:1609.02993}, 2016.

\bibitem{vidal2017mathematics}
R.~Vidal, J.~Bruna, R.~Giryes, and S.~Soatto.
\newblock Mathematics of deep learning.
\newblock {\em arXiv preprint arXiv:1712.04741}, 2017.

\bibitem{weber2017coarse}
M.~Weber, E.~Saucan, and J.~Jost.
\newblock Coarse geometry of evolving networks.
\newblock {\em Journal of Complex Networks}, 6(5):706--732, 2017.

\bibitem{wiatowski2018mathematical}
T.~Wiatowski and H.~B{\"o}lcskei.
\newblock A mathematical theory of deep convolutional neural networks for
  feature extraction.
\newblock {\em IEEE Trans. Inf. Theory}, 64(3):1845--1866, 2018.

\bibitem{wu2016stimulated}
C.~Wu, P.~Karanasou, M.~J. Gales, and K.~C. Sim.
\newblock Stimulated deep neural network for speech recognition.
\newblock Technical report, University of Cambridge, 2016.

\bibitem{yannakakis2017artificial}
G.~N. Yannakakis and J.~Togelius.
\newblock {\em Artificial Intelligence and Games}.
\newblock Springer, 2017.

\bibitem{YAROTSKY2017103}
D.~Yarotsky.
\newblock Error bounds for approximations with deep {ReLU} networks.
\newblock {\em Neural Networks}, 94:103 -- 114, 2017.

\end{thebibliography}

\end{document}